\newtheorem{theorem}{Theorem}
\title{Unsupervised Cross-lingual Transfer of Word Embedding Spaces}
\author{Ruochen Xu, Yiming Yang, Naoki Otani, Yuexin Wu \\
  Carnegie Mellon University \\
  {\tt \{ruochenx, yiming, notani, yuexinw \}@cs.cmu.edu} \\
  }
\date{}
\begin{document}
\maketitle
\begin{abstract}






Cross-lingual transfer of word embeddings aims to establish the semantic mappings among words in different languages by learning the transformation functions over the corresponding word embedding spaces. Successfully solving this problem would benefit many downstream tasks such as to translate text classification models from resource-rich languages (e.g. English) to low-resource languages. Supervised methods for this problem rely on the availability of cross-lingual supervision, either using parallel corpora or bilingual lexicons as the labeled data for training, which may not be available for many low resource languages. This paper proposes an unsupervised learning approach that does not require any cross-lingual labeled data. Given two monolingual word embedding spaces for any language pair, our algorithm optimizes the transformation functions in both directions simultaneously based on distributional matching as well as minimizing the back-translation losses. We use a neural network implementation to calculate the Sinkhorn distance, a well-defined distributional similarity measure, and optimize our objective through back-propagation. Our evaluation on benchmark datasets for bilingual lexicon induction and cross-lingual word similarity prediction shows stronger or competitive performance of the proposed method compared to other state-of-the-art supervised and unsupervised baseline methods over many language pairs. 
\end{abstract}

\section{Introduction}

Word embeddings are well known to capture meaningful representations of words based on large text corpora ~\cite{mikolov2013efficient, pennington2014glove}. Training word vectors using monolingual corpora is a common practice in various NLP tasks. 
However, how to establish cross-lingual semantic mapping among monolingual embeddings remain an open challenge as the availability of resources and benchmarks are highly imbalanced across languages.

Recently, increasing effort of research has been motivated to address this challenge. 
Successful cross-lingual word mapping will benefit many cross-lingual learning tasks, such as transforming text classification
models trained in resource-rich languages to low-resource languages. Downstream applications include word alignment, text classification, named entity recognition, dependency parsing, POS-tagging, and more \cite{sogaard2015inverted}. 
Most methods for cross-lingual transfer of word embeddings are based on supervised or semi-supervised learning, i.e., they require cross-lingual supervision such as human-annotated bilingual lexicons and parallel corpora~\cite{lu2015deep, smith2017offline, artetxe2016learning}. 
Such a requirement may not be met for many language pairs in the real world. 

This paper proposes an unsupervised approach to the cross-lingual transfer of monolingual word embeddings, which requires zero cross-lingual supervision. The key idea is to optimize the mapping in both directions for each language pair (say A and B), in the way that the word embedding translated from language A to language B will match the distribution of word embedding in language B. And when translated back from B to A, the word embedding after two steps of transfer will be maximally close to the original word embedding. A similar property holds for the other direction of the loop (from B to A and then from A back to B).
Specifically, we use the Sinkhorn distance~\cite{cuturi2013sinkhorn} to capture the distributional similarity between two set of embeddings after transformation, which we found empirically superior to the KL-divergence~\cite{zhang2017adversarial} and distance to nearest neighbor \cite{artetxe2017learning,conneau2017word} with regards to the quality of learned transformation as well as the robustness under different training conditions.

Our novel contributions in the proposed work include:
\begin{itemize}
  \setlength{\parskip}{0in}
  \setlength{\itemsep}{0in}
\item We propose an unsupervised learning framework which incorporates the Sinkhorn distance as a distributional similarity measure in the back-translation loss function.
\item We use a neural network to optimize our model, especially to implement the Sinkhorn distance whose calculation itself is an optimization problem. 
\item Unlike previous models which only consider cross-lingual transformation in a single direction, our model jointly learns the word embedding transfer in both directions for each language pair. 
\item We present an intensive comparative evaluation where our model achieved the state-of-the-art performance for many language pairs in cross-lingual tasks.  
\end{itemize}




\section{Related Work}
We divide the related work into supervised and unsupervised categories.  Representative methods in both categories are included in our comparative evaluation (Section \ref{sec:impl}). We also discuss some related work in unsupervised domain transfer in addition. 
\vskip 0.1in
\noindent\textbf{Supervised Methods}:
There is a rich body of supervised methods for learning cross-lingual transfer of word embeddings based on bilingual dictionaries \cite{mikolov2013efficient, faruqui2014improving, artetxe2016learning, xing2015normalized, duong2016learning, gouws2015simple}, sentence-aligned corpora \cite{kovcisky2014learning, hermann2014multilingual, gouws2015bilbowa} and document-aligned corpora \cite{vulic2016bilingual, sogaard2015inverted}. The most relevant line of work is that by \citet{mikolov2013efficient} where they showed monolingual word embeddings are likely to share similar geometric properties across languages although they are trained separately and hence cross-lingual mapping can be captured by a linear transformation across embedding spaces. Several follow-up studies tried to improve the cross-lingual transformation in various ways \cite{faruqui2014improving, artetxe2016learning, xing2015normalized, duong2016learning, ammar2016massively, artetxe2016learning, zhang2016ten, shigeto2015ridge}. Nevertheless, all these methods require bilingual lexicons for supervised learning. \citet{vulic2016role} showed that
5000 high-quality bilingual lexicons are sufficient for learning a reasonable cross-lingual mapping. 
\vskip 0.1in
\noindent\textbf{Unsupervised Methods}
have been studied to establish cross-lingual mapping without any human-annotated supervision. 
Earlier work simply relied on word occurrence information only \cite{rapp1995identifying,fung1995compiling} while later efforts have considered more sophisticated statistics in addition \cite{haghighi2008learning}. 
The main difficulty in unsupervised learning of cross-lingual mapping is the formulation of the objective function, i.e., how to measure the goodness of an induced mapping without any supervision is a non-trivial question. 
\citet{cao2016distribution} tried to match the mean and standard deviation of the embedded word vectors in two different languages after mapping the words in the source language to the target language. However, such an approach has shown to be sub-optimal because the objective function only carries the first and second order statistics of the mapping. \citet{artetxe2017learning} tried to impose an orthogonal constraint to their linear transformation model and minimize the distance between the transferred source-word embedding and its nearest neighbor in the target embedding space. Their method, however, requires a seed bilingual dictionary as the labeled training data and hence is not fully unsupervised. \cite{zhang2017adversarial,barone2016towards} adapted a generative adversarial network (GAN) to make the transferred embedding of each source-language word indistinguishable from its true translation in the target embedding space \cite{goodfellow2014generative}. The adversarial model could be optimized in a purely unsupervised manner but is often suffered from unstable training, i.e. the adversarial learning does not always improve the performance over simpler baselines.
\citet{zhang2017earth}, \citet{conneau2017word} and \citet{artetxe2017learning} also tried adversarial approaches for the induction of seed bilingual dictionaries, as a sub-problem in the cross-lingual transfer of word embedding. 

\vskip 0.1in
\noindent\textbf{Unsupervised Domain Transfer}:
Generally speaking, learning the cross-lingual transfer of word embedding can be viewed as a domain transfer problem, where the domains are word sets in different languages. Thus various work in the field of \textit{unsupervised domain adaptation} or \textit{unsupervised transfer learning} can shed light on our problem.
For example, \citet{he2016dual} proposed a semi-supervised method for machine translation to utilize large monolingual corpora. \citet{shen2017style} used unsupervised learning to transfer sentences of different sentiments. Recent work in computer vision addresses the problem of image style transfer without any annotated training data~\cite{zhu2017unpaired, taigman2016unsupervised, yi2017dualgan}. Among those, our work is mostly inspired by the work on CycleGAN \cite{zhu2017unpaired}, and we adopt their cycled consistent loss over images into our back-translation loss. One key difference of our method from CycleGAN is that they used the training loss of an adversarial classifier as an indicator of the distributional distance, but instead, we introduce the Sinkhorn distance in our objective function and demonstrate its superiority over the representative method using adversarial loss \cite{zhang2017adversarial}.

\section{Proposed Method}
Our system takes two sets of monolingual word embeddings of dimension $d$ as input, which are trained separately on two languages. We denote them as $X=\{x_i\}_{i=1}^{n}$,  $Y=\{y_j\}_{j=1}^{m}$, $ x_i, y_j \in \mathbb{R}^d $. 
During the training of monolingual word embedding for $X$ and $Y$, we also have the access to the word frequencies, represented by vectors $r \in \mathbb{N}^n$ and $c \in \mathbb{N}^m$ for $X$ and $Y$, respectively. Specifically, $r_i$ is the frequency for word (embedding) $x_i$ and similarly for $c_j$ of $y_j$. As illustrated in Figure \ref{fig:sys_arch}, our model has two mappings: $G: X \rightarrow Y$ and $F: Y \rightarrow X$. We further denote transferred embedding from $X$ as $G(X) := \{G(x_i)\}_{i=1}^{n}$ and correspondingly for $F(Y)$. 

\begin{figure}[ht]
\label{fig:sys_arch}
\centering
\includegraphics[width=0.5\textwidth]{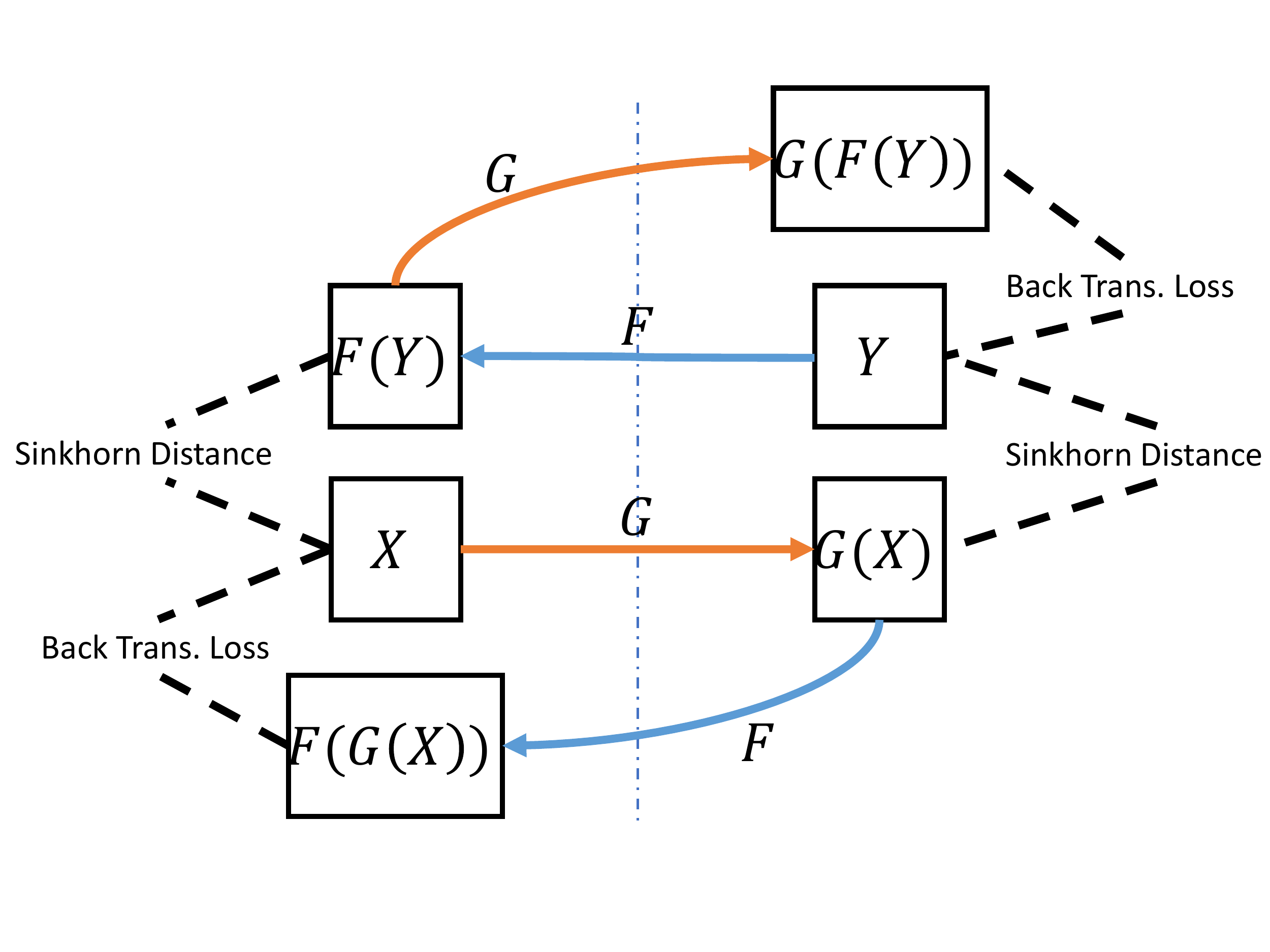}
\caption{The model takes monolingual word embedding $X$ and $Y$ as input. $G$ and $F$ are embedding transfer functions parameterized by a neural network, which are represented by solid arrows. The dashed lines indicate the input for our objective losses, namely the Sinkhorn distance and back-translation loss}.
\end{figure}

In the unsupervised setting, the goal is to learn the mapping $G$ and $F$ without any paired word translation. To achieve this, our loss function consists of two parts: Sinkhorn distance \cite{cuturi2013sinkhorn} for matching the distribution of transferred embedding to its target embedding distribution; and a back-translation loss for preventing degenerated transformation.

\subsection{Sinkhorn Distance}
\subsubsection{Definition}
\label{sec:sk-dist-def}
Sinkhorn distance is a recently proposed distance between probability distributions. We use the Sinkhorn distance to measure the closeness between $G(X)$ and $Y$, and also between $F(Y)$ and $X$. During the training, our model optimizes $G$ and $F$ for lower Sinkhorn distance to make the transferred embeddings match the distribution of the target embeddings. Here we only illustrate the Sinkhorn distance between $G(X)$ and $Y$, the derivation for $F(Y)$ and $X$ is very similar. Although the vocabulary sizes of two languages could be different, we are able to sample mini-batches of equal size from $G(X)$ and $Y$. therefore we assume $n=m$ in the following derivation.

To compute Sinkhorn distance, we firstly compute a distance matrix $M^{(G)} \in \mathbb{R}^{n \times m}$ between $G(X)$ and $Y$ where $M^{(G)}_{ij}$ is the distance measure between $G(x_i)$ and $y_j$. The superscript on $M^{(G)}$ indicates the distance that depends on a parameterized transformation $G$. For instance, if we choose Euclidean distance as a measure (see Section  \ref{sec:dist choice} for more discussions), we will have $$M^{(G)}_{ij} = \| G(x_i) - y_j \|_{2.} $$


Given the distance matrix, the Sinkhorn distance between $P_{G(X)}$ and $P_{Y}$  is defined as:
\begin{equation}
\label{eq:sk-dist-def}
d_{sh}(G) := \min\limits_{P \in U_\alpha(r,c)} \langle P, M^{(G)} \rangle
\end{equation} where $\langle \cdot , \cdot \rangle$ is the Forbenius dot-product and $U_\alpha(r,c)$ is an entropy constrained transport polytope, defined as
\begin{align}
\label{eq:entr-trans-poly}
U_\alpha(r,c) = \{ & P \in \mathbb{R_{+}}^{n \times m} | P \mathds{1}_{m} = r,  P^T \mathds{1}_{n} = c,\nonumber \\
& h(P) \leq h(r) + h(c) - \alpha \} 
\end{align} 
Note that $P$ is non-negative and the first two constraints make its element-wise sum be~$1$. Therefore, $P$ can be seen as a set of probability distributions. The same applies for $r$ and $c$ since they are frequencies.
$h$ is the entropy function defined on any probability distributions and $\alpha$ is a hyperparameter to choose. 
For any probabilistic matrix $P \in U_\alpha(r,c)$, it can be viewed as the joint probability of $(G(X), Y)$. The first two constraints ensure that $P$ has marginal distribution on $G(X)$ as $P_{G(X)}$ and on $Y$ as $P_{Y}$. We can also view $P_{ij}$ as the evidence for establishing a translation between word vector $x_i$ and word vector $y_j$. 

An intuitive interpretation of equation \eqref{eq:sk-dist-def} is that we are trying to find the optimal transport probability $P$ under the entropy constraint such that the total distance to transport from $G(X)$ to $Y$ is minimized.

\begin{algorithm}[t]
\caption{Computation of Sinkhorn Distance $d_{sh}(G)$}\label{alg:sinkhorn-dist}
\begin{algorithmic}[1]
\Procedure{Sinkhorn}{$M^{(G)}, r, c, \lambda, I$}
\State $K^{(G)} := e^{-\lambda M^{(G)}}$
\State $v = \mathds{1}_{m}/m$ \Comment{normalized one vector}
\State $i = 0$
\While{$i<I$}\Comment{iterate for $I$ times}
\State $u = r./K^{(G)}v$
\State $v = c./{K^{(G)}}^Tu$
\State $i = i + 1$
\EndWhile\label{euclidendwhile}
\State $d_{sh}(G)=u^T (( K^{(G)} \otimes M^{(G)} ) v )$
\State \textbf{return} $d_{sh}(G)$\Comment{The Sinkhorn distance}
\EndProcedure
\end{algorithmic}
\end{algorithm}

\subsubsection{Computing Sinkhorn Distance $d_{sh}(G)$}
\label{sec:matrix_scal}
\citet{cuturi2013sinkhorn} showed that the optimal solution of formula \eqref{eq:sk-dist-def} has the form $P^* = \mathbf{diag}(u) K \mathbf{diag}(v)$ , where $u$ and $v$ are some non-negative vectors and $K^{(G)} := e^{-\lambda M^{(G)}}$; $\lambda$ is the Lagrange multiplier for the entropic constraint in \ref{eq:entr-trans-poly} and each $\alpha$ in Equation~\eqref{eq:sk-dist-def} has one corresponding $\lambda$. The Sinkhorn distance can be efficiently computed by a matrix scaling algorithm. We present the pseudo code in Algorithm~\ref{alg:sinkhorn-dist}.
Note that the computation of $d_{sh}(G)$ only requires matrix-vector multiplication. Therefore, we can compute and back propagate the gradient of $d_{sh}(G)$ with regards to the parameters in $G$ using standard deep learning libraries. We show our implementation details in Section~\ref{sec:impl} and supplementary material.

\subsubsection{Choice of the Distance Metric}\label{sec:dist choice}
In Section~\ref{sec:sk-dist-def}, we used the Euclidean distance of vector pairs to define $M^{(G)}$ and Sinkhorn distance $d_{sh}(G)$. However, in our preliminary experiment, we found that Euclidean distance of unnormalized vectors gave poor performance. Therefore, following the common practice, we normalize all word embedding vectors to have a unit L2 norm in the construction of $M^{(G)}$.

As pointed out in Theorem 1 of~\citet{cuturi2013sinkhorn}, $M^{(G)}$ must be a valid metric in order to make $d_{sh}(G)$ a valid metric. For example, the commonly used cosine distance, which is defined as $CosDist(a, b) = 1 - cos(a, b)$, is not a valid metric because it does not satisfy triangle inequality \footnote{If we select $a=[1,0], b=[\frac{\sqrt{2}}{2}, \frac{\sqrt{2}}{2}], c=[0,1] $ We have $ CosDist(a, c) \geq CosDist(a, b) + CosDist(b, c) $ , which violates the triangle inequality.}. Thus, for constructing $M^{(G)}$, we propose the square root cosine distance ($SqrtCosDist$) below:
\begin{align}
SqrtCosDist(a, b) := \sqrt{2-2cos(a, b)} \\
M^{(G)}_{ij} = SqrtCosDist(G(x_i), y_j)  
\end{align}
\begin{theorem}
$SqrtCosDist$ is a valid metric.
\end{theorem}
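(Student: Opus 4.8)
The plan is to verify the four metric axioms for $SqrtCosDist$: non-negativity, identity of indiscernibles, symmetry, and the triangle inequality. Since the paper normalizes all embeddings to unit $L2$ norm before constructing $M^{(G)}$, I would work throughout on the unit sphere, i.e. assume $\|a\|_2 = \|b\|_2 = 1$. Under this assumption $\cos(a,b) = \langle a, b\rangle$, which is what makes the argument clean and which I would state explicitly as the standing hypothesis.

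The central observation, which I would establish first, is the algebraic identity
\begin{equation}
\| a - b \|_2^2 = \|a\|_2^2 - 2\langle a, b\rangle + \|b\|_2^2 = 2 - 2\cos(a,b),
\end{equation}
valid for any unit vectors $a$ and $b$. Taking square roots gives $SqrtCosDist(a,b) = \|a - b\|_2$. In other words, on the unit sphere the square root cosine distance coincides exactly with the ordinary Euclidean distance between the two points, and the whole theorem reduces to the fact that $\|\cdot\|_2$ is a metric.

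Given this identity, the remaining steps are immediate consequences of known properties of the Euclidean norm. Non-negativity follows from $\|a-b\|_2 \geq 0$ (equivalently $2 - 2\cos(a,b) \geq 0$ since $\cos(a,b) \leq 1$); symmetry follows from $\|a - b\|_2 = \|b - a\|_2$; identity of indiscernibles follows because $\|a-b\|_2 = 0$ iff $a = b$ (where I would note that, for unit vectors, $\cos(a,b)=1$ forces equality in Cauchy--Schwarz and hence $a=b$, rather than merely parallelism); and the triangle inequality $SqrtCosDist(a,c) \leq SqrtCosDist(a,b) + SqrtCosDist(b,c)$ is precisely the triangle inequality for $\|\cdot\|_2$ applied to the three points $a,b,c$.

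The triangle inequality is exactly where the plain cosine distance fails, as shown by the counterexample in the footnote, so it is the step one would expect to be the crux. The point of the proof, however, is that the Euclidean identity dispatches it for free, so the real content lies entirely in establishing that identity rather than in any inequality manipulation. The only subtlety worth flagging is that the identity $SqrtCosDist = \|a-b\|_2$ holds only after unit normalization: on arbitrary vectors the quantity depends only on direction and the identity of indiscernibles would fail, so I would make the unit-norm restriction explicit, consistent with the construction of $M^{(G)}$ in the paper.
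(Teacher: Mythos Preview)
Your proposal is correct and follows the same route as the paper: both establish the identity $SqrtCosDist(a,b)=\|a-b\|_2$ on unit vectors (the paper does it by explicitly normalizing $\hat a=a/\|a\|$, $\hat b=b/\|b\|$ and expanding $\langle \hat a-\hat b,\hat a-\hat b\rangle$) and then invoke the fact that the Euclidean norm is a metric. Your explicit remark that the unit-norm restriction is required for identity of indiscernibles is a point the paper's own proof leaves implicit.
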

\begin{proof}
$\forall a, b \in \mathbb{R}^d$, let $\hat{a} = \frac{a}{\|a\|}$, $\hat{b} = \frac{b}{\|b\|}$. We have $cos(a,b) = \langle \hat{a}, \hat{b} \rangle$ and $\langle \hat{a}, \hat{a} \rangle = \langle \hat{b}, \hat{b} \rangle = 1$. Then
\begin{align*}
& SqrtCosDist(a, b)  =  \sqrt{2-2cos(a, b)} \\
& = \sqrt{\langle \hat{a}, \hat{a} \rangle + \langle \hat{b}, \hat{b} \rangle - 2\langle \hat{a}, \hat{b} \rangle } \\
& = \sqrt{\langle \hat{a} - \hat{b}, \hat{a} - \hat{b} \rangle} \\
& = \| \hat{a} - \hat{b} \|
\end{align*}
Obviously, the last term is the Euclidean distance between normalized input vectors $\hat{a}$ and $\hat{b}$. Since Euclidean distance is a valid metric, it follows that $SqrtCosDist$ satisfies all the axioms for a valid metric. 
\end{proof}

\subsection{Objective Function}
Given enough capacity, $G$ is capable to transfer $X$ to $Y$ for arbitrary word-to-word mappings. To ensure that, we learn a meaningful translation and also to regularize the search space of possible transformations, we enforce the word embedding after the forward and the backward transformation should not diverge much from its original direction. We simply choose the back-translation loss based on the cosine similarity:
\begin{align}
\label{eq:back-trans}
d_{bt}(G,F) = & \sum_{i} 1 - cos(x_i, F(G(x_i))) + \nonumber \\  
 & \sum_{j} 1 - cos(y_i, G(F(y_i)))
\end{align} where $cos$ is the cosine similarity.

Putting everything together, we minimize the following objective function. 
\begin{equation}
\label{eq:overall-obj}
L_{X,Y,r,c}(G, F) = d_{sh}(G) + d_{sh}(F) + \beta d_{bt}(G,F)
\end{equation} where hyper-parameter $\beta$ controls the relative weight of the last term against the first two terms in the objective function. By definition, computation of $d_{sh}(G)$ or $d_{sh}(F)$ involves another minimization problem as shown in Equation~\eqref{eq:sk-dist-def}. We solve it using the matrix scaling algorithm in Section~\ref{sec:matrix_scal}, and treat $d_{sh}(G)$ as a deterministic and differentiable function of parameters in $G$. The same holds for $d_{sh}(F)$ and $F$.

\subsection{Wasserstein GAN Training for Good Initial Point}
\label{subsec:WGAN}
In preliminary experiments, we found that our objective \ref{eq:overall-obj} is sensitive to the initialization of the weight in $G$ and $F$ in the purely unsupervised setting. It requires a good initial setting of the parameters to avoid getting stuck in the poor local minimal. To address this sensitivity issue, we employed a similar approach as in \cite{zhang2017earth,aldarmaki2018unsupervised} to firstly used an adversarial training approach to learn $G$ and $F$ and use them as the initial point for training our full objective \ref{eq:overall-obj}. More specifically, we choose to minimize the optimal transport distance below.
\begin{equation}
\label{eq:ot-dist-def}
d_{ot}(G) := \min\limits_{P \in U(r,c)} \langle P, M^{(G)} \rangle
\end{equation} $U$ is the transport polytope without entropy constraint, defined as follows.
\begin{align}
\label{eq:trans-poly}
U = \{ & P \in \mathbb{R_{+}}^{n \times m} | P \mathds{1}_{m} = r,  P^T \mathds{1}_{n} = c \} 
\end{align} 

We optimize the distance above by its dual form and through adversarial training, which is also known as Wasserstein GAN~(WGAN)~\cite{arjovsky2017wasserstein}. We applied the optimization trick proposed by~\citet{gulrajani2017improved}.

Although the first phase of adversarial training could be unstable, and the performance is lower than using the Sinkhorn distance, the adversarial training narrows down the search space of model parameters and boosting the training of our proposed model.

\subsection{Implementation} \label{sec:impl}
We implemented transformation $G$ and $F$ by a linear transformation. The dimension of the input and output are the same with the word embedding dimension $d$.\footnote{We tried more complex non-linear transformations for $G$ and $F$. The performance is slightly worse than the linear case.} For all the experiments in the subsequent section, the $\beta$ in \eqref{eq:overall-obj} was set to be $0.1$. For hyper-parameters from the computation of Sinkhorn distance, we choose $\lambda = 10$ and run the matrix scaling algorithm for $20$ iterations. Due to the space constraint, a detailed implementation description is presented in the supplementary material. The code of our implementation is publicly available \footnote{Our implementation \url{https://github.com/xrc10/unsup-cross-lingual-embedding-transfer}}.



\section{Experiments}


We conducted an evaluation of our approach in comparison with state-of-the-art supervised/unsupervised methods on several evaluation benchmarks for bilingual lexicon induction (Task 1) and word similarity prediction (Task 2). We include our main results in this section and report the ablation study in the supplementary material.

\subsection{Data}\label{data}
\subsubsection{Monolingual Word Embedding Data}
All the methods being evaluated in both tasks take monolingual word embedding in each language as the input data. We use publicly available pre-trained word embeddings trained on Wikipedia articles: (1) a smaller set of word embeddings of dimension $50$ trained on comparable Wikipedia dump in five languages~\cite{zhang2017adversarial}\footnote{Available at  \url{http://nlp.csai.tsinghua.edu.cn/~zm/UBiLexAT}} and (2) a larger set of word embeddings of dimension $300$ trained on Wikipedia dump in 294 languages~\cite{bojanowski2016enriching}\footnote{Available at \url{https://github.com/facebookresearch/fastText/blob/master/pretrained-vectors.md}}. For convenience, we name the two sets \textbf{WE-Z} and \textbf{WE-C}, respectively. 
%
\subsubsection{Bilingual Lexicon Data}
We need true translation pairs of words for evaluating methods in bilingual lexicon induction (Task~1). We followed previous studies and prepared two datasets below.

\vskip 0.1in
\noindent\textbf{LEX-Z}: \citet{zhang2017adversarial} constructed the bilingual lexicons from various resources. Since their ground truth word pairs are not released, we followed their procedure, crawled bilingual dictionaries and randomly separated them into the training and testing set of equal size.\footnote{The bilingual dictionaries we crawled are submitted as supplementary material.} Note that our proposed method did not utilize the training set. It was only used by supervised baseline methods described in Section~\ref{subsec:baseline}. There are eight language pairs (order counted); the corresponding dataset statistics are summarized in Table~\ref{tab:UnBiLex-stats}. We use WE-Z embeddings in this dataset.

\vskip 0.1in
\noindent\textbf{LEX-C}: This lexicon was constructed by~\citet{conneau2017word} and contains more translation pairs than LEX-Z. They divided them into training and testing set. We run our model and the baseline methods on 16 language pairs. For each language pair, the training set contains $5,000$ unique query words and the testing set has $1,500$ query words. We followed~\citet{conneau2017word} and set the search space of candidate translations to be the $200,000$ most frequent words in each target language. We use WE-C embeddings in this dataset.

\begin{table}[t]
\centering
\begin{tabular}{@{}llrrr@{}}
\toprule
 &  & \# tokens & vocab. size & bi. lex. size \\ \midrule
\multirow{2}{*}{tr-en} & tr & 6m & 7,482 & 18,404 \\
 & en & 28m & 13,220 & 27,327 \\ \midrule
\multirow{2}{*}{es-en} & es & 61m & 4,774 & 3,482 \\
 & en & 95m & 6,637 & 10,772 \\ \midrule
\multirow{2}{*}{zh-en} & zh & 21m & 3,349 & 54,170 \\
 & en & 53m & 5,154 & 51,375 \\ \midrule
\multirow{2}{*}{it-en} & it & 73m & 8,490 & 4,999 \\
 & en & 93m & 6,597 & 11,812 \\ \bottomrule
\end{tabular} 
\caption{The statistics of LEX-Z. The languages are Spanish~(es), French~(fr), Chinese~(zh), Turkish~(tr) and English~(en). Number of tokens is the size of training corpus of WE-Z. The bilingual lexicon size means the number of unique words of a language in the gold bilingual lexicons.
}
\label{tab:UnBiLex-stats}
\end{table}

\subsubsection{Bilingual Word Similarity Data}

For bilingual word similarity prediction (Task 2) we need the true labels for evaluation. Following \citet{conneau2017word}, we used the SemEval~2017 competition dataset, where human annotators measured the cross-lingual similarity of nominal word pairs according to the five-point Likert scale. This dataset contains word pairs across five languages: English~(en), German~(de), Spanish~(es), Italian~(it), and Farsi~(fa). Each language pair has about 1,000 word pairs annotated with a real similarity score ranging from $0$ to $4$.

\begin{table*}[ht]
\centering
\begin{tabular}{@{}llrrrrrrrr@{}}
\toprule
 & \begin{tabular}[c]{@{}l@{}}Methods\end{tabular} & tr-en & en-tr & es-en & en-es & zh-en & en-zh & it-en & en-it \\ \midrule
\multirow{6}{*}{Supervised} & \citet{mikolov2013efficient} & 19.41 & 10.81 & 68.73 & 41.19 & 45.88 & \textbf{45.37} & 59.83 & \textbf{41.26} \\
 & \citet{zhang2016ten} & 23.39 & \textbf{11.07} & 72.36 & 41.19 & 48.01 & 42.66 & \textbf{63.19} & 40.37 \\
 & \citet{xing2015normalized} & 24.00 & 10.78 & 71.92 & 41.02 & 48.10 & 42.90 & 62.81 & 40.43 \\
 & \citet{shigeto2015ridge} & \textbf{26.56} & 8.52 & 72.23 & 37.80 & \textbf{49.95} & 38.15 & 63.14 & 35.63 \\
 & \citet{artetxe2016learning} & 23.49 & 10.74 & 71.98 & 41.12 & 48.01 & 42.66 & 63.14 & 40.28 \\
 & \citet{artetxe2017learning} & 22.88 & 10.78 & \textbf{72.61} & \textbf{41.62} & 47.54 & 42.82 & 61.32 & 39.63 \\ \midrule
\multirow{3}{*}{Unsupervised} & \citet{conneau2017word} & 4.09 & 1.41 & 60.16 & 33.58 & 41.98 & 34.70 & 26.98 & 15.47 \\
 & \citet{zhang2017adversarial} & 15.83 & 7.41 & 63.41 & 37.73 & 42.08 & 41.26 & 54.75 & 37.17 \\
 & Ours & \textbf{23.29} & \textbf{9.96} & \textbf{73.05} & \textbf{41.95} & \textbf{49.03} & \textbf{44.63} & \textbf{61.42} & \textbf{39.63} \\ \bottomrule
\end{tabular}
\caption{The accuracy@k scores of all methods in bilingual lexicon induction on \textbf{LEX-Z}. The best score for each language pair is bold-faced for the supervised and unsupervised categories, respectively. Language pair "A-B" means query words are in language A and the search space of word translations is in language B. Languages are paired among \textbf{English(en), Turkish~(tr), Spanish~(es), Chinese~(zh) and Italian~(it)}.}
\label{tab:UnBiLex}
\end{table*}

\subsection{Baseline Methods}\label{data}
\label{subsec:baseline}
We evaluated the same set of supervised and unsupervised baselines for comparative evaluation in both Task 1 and Task 2. The supervised baselines  include the methods of \citet{shigeto2015ridge, zhang2016ten, artetxe2016learning, xing2015normalized,mikolov2013efficient,artetxe2017learning}.\footnote{The implementations are available from \url{https://github.com/artetxem/vecmap.}} We fed all the supervised methods with the bilingual dictionaries in the training portions of the LEX-Z and LEX-C datasets, respectively.

For unsupervised baselines we include the methods of \citet{zhang2017adversarial} and \citet{conneau2017word}, whose source code is publicly available as provided by the authors.\footnote{We used implementation by \citet{zhang2017adversarial} from \url{http://nlp.csai.tsinghua.edu.cn/~zm/UBiLexAT} and that of \citet{conneau2017word} from \url{https://github.com/facebookresearch/MUSE}}

\begin{table*}[h]
\centering
\begin{tabular}{@{}llllllllll@{}}
\toprule
 & \begin{tabular}[c]{@{}l@{}}Methods\end{tabular} & bg-en & en-bg & ca-en & en-ca & sv-en & en-sv & lv-en & en-lv \\ \midrule
\multirow{6}{*}{Supervised} & \citet{mikolov2013efficient} & 44.80 & \textbf{48.47} & 57.73 & \textbf{66.20} & 43.73 & \textbf{63.73} & 26.53 & \textbf{28.93} \\
 & \citet{zhang2016ten} & 50.60 & 39.73 & 63.40 & 58.73 & 50.87 & 53.93 & 34.53 & 22.87 \\
 & \citet{xing2015normalized} & 50.33 & 40.00 & 63.40 & 58.53 & 51.13 & 53.73 & 34.27 & 21.60 \\
 & \citet{shigeto2015ridge} & \textbf{61.00} & 33.80 & \textbf{69.33} & 53.60 & \textbf{61.27} & 41.67 & \textbf{42.20} & 13.87 \\
 & \citet{artetxe2016learning} & 53.27 & 43.40 & 65.27 & 60.87 & 54.07 & 55.93 & 35.80 & 26.47 \\
 & \citet{artetxe2017learning} & 47.27 & 34.40 & 61.27 & 56.73 & 38.07 & 44.20 & 24.07 & 12.20 \\ \midrule
\multirow{3}{*}{Unsupervised} & \citet{conneau2017word} & 26.47 & 13.87 & 41.00 & 33.07 & 24.27 & 24.47 & - & - \\
 & \citet{zhang2017adversarial} & - & - & - & - & - & - & - & - \\
 & Ours & \textbf{50.33} & \textbf{34.27} & \textbf{58.60} & \textbf{54.60} & \textbf{48.13} & \textbf{50.47} & \textbf{27.73} & \textbf{13.53} \\ \bottomrule
\end{tabular}
\caption{The accuracy@k scores of all methods in bilingual lexicon induction on \textbf{LEX-C}. The best score for each language pair is bold-faced for the supervised and unsupervised categories, respectively. Languages are paired among \textbf{English(en), Bulgarian(bg), Catalan(ca), Swedish(sv) and Latvian(lv)}. "-" means that during the training time, the model failed to converge to reasonable local minimal and hence the result is omitted in the table.}
\label{tab:MUSE-low}
\end{table*}

\begin{table*}[ht]
\centering
\begin{tabular}{@{}llllllllll@{}}
\toprule
 & \begin{tabular}[c]{@{}l@{}}Methods\end{tabular} & de-en & en-de & es-en & en-es & fr-en & en-fr & it-en & en-it \\ \midrule
\multirow{6}{*}{Supervised} & \citet{mikolov2013efficient} & 61.93 & \textbf{73.07} & 74.00 & \textbf{80.73} & 71.33 & \textbf{82.20} & 68.93 & \textbf{77.60} \\
 & \citet{zhang2016ten} & 67.67 & 69.87 & 77.27 & 78.53 & 76.07 & 78.20 & 72.40 & 73.40 \\
 & \citet{xing2015normalized} & 67.73 & 69.53 & 77.20 & 78.60 & 76.33 & 78.67 & 72.00 & 73.33 \\
 & \citet{shigeto2015ridge} & \textbf{71.07} & 63.73 & \textbf{81.07} & 74.53 & \textbf{79.93} & 73.13 & \textbf{76.47} & 68.13 \\
 & \citet{artetxe2016learning} & 69.13 & 72.13 & 78.27 & 80.07 & 77.73 & 79.20 & 73.60 & 74.47 \\
 & \citet{artetxe2017learning} & 68.07 & 69.20 & 75.60 & 78.20 & 74.47 & 77.67 & 70.53 & 71.67 \\ \midrule
Unsupervised & \citet{conneau2017word} & \textbf{69.87} & \textbf{71.53} & \textbf{78.53} & 79.40 & \textbf{77.67} & \textbf{78.33} & \textbf{74.60} & \textbf{75.80} \\
 & \citet{zhang2017adversarial} & - & - & - & - & - & - & - & - \\
 & Ours & 67.00 & 69.33 & 77.80 & \textbf{79.53} & 75.47 & 77.93 & 72.60 & 73.47 \\ \bottomrule
\end{tabular}
\caption{The accuracy@k scores of all methods in bilingual lexicon induction on \textbf{LEX-C}. The best score for each language pair is bold-faced for the supervised and unsupervised categories, respectively. Languages are paired among \textbf{English~(en), German~(de), Spanish~(es), French~(fr) and Italian~(it)}. "-" means that during the training time, the model failed to converge to reasonable local minimal and hence the result is omitted in the table.}
\label{tab:MUSE-high}
\end{table*}

\subsection{Results in Bilingual Lexicons Induction (Task 1)}
Bilingual lexicon induction is a task to induce a translation in the target language for each query word in the source language. After the query word and the target-language words are represented in the same embedding space (or after our system maps the query word from the source embedding space to the target embedding space), the $k$ nearest target words are retrieved based on their cosine similarity scores with respect to the query vector. 
If the $k$ retrieved target words contain any valid translation according to the gold bilingual lexicon, the translation~(retrieval) is considered successful. The fraction of the correctly translated source words in the test set is defined as $accuracy@k$, which is conventional metric in benchmark evaluations. 


Table~\ref{tab:UnBiLex} shows the accuracy@1 for all the methods on LEX-Z in our evaluation. We can see that our method outperformed the other unsupervised baselines by a large margin on all the eight language pairs. Compared with the supervised methods, our method is still competitive (the best or the second-best scores on four out of eight language pairs), even ours does not require cross-lingual supervision. Also, we notice the performance variance over different language pairs. Our method outperforms all the methods (supervised and unsupervised combined) on the English-Spanish (en-es) pair, perhaps for the reasons that these two languages are most similar to each other, and that the monolingual word embeddings for this pair in the comparable corpus are better aligned than the other language pairs. On the other hand, all the methods including ours have the worst performance on the English-Turkish~(en-tr) pair. Another observation is the performance differences in the two directions of the language pair. For example, the performance of it-en is better than en-it for all methods in table \ref{tab:UnBiLex}. A part of the reason is that there are more unique English words than non-English words in the evaluation set. This would cause direction “xx-en” to be easier than "en-xx" because there are often multiple valid ground truth English translations for each query in “xx”. But the same may not hold for the opposite direction of “en-xx”.  Nevertheless, the relative performance of our method compared to others is quite robust over different language pairs and different directions of translation. 

Table~\ref{tab:MUSE-low} and Table~\ref{tab:MUSE-high} summarize the results of all the methods on the LEX-C dataset.
Several points may be worth noticing. Firstly, the performance scores on LEX-C are not necessarily consistent with those on LEX-Z~(Table \ref{tab:UnBiLex}) even if the methods and the language pairs are the same; this is not surprising as the two datasets differ in query words, word embedding quality, and training-set sizes.  Secondly, the performance gap between the best supervised methods and the best unsupervised methods in both Table~\ref{tab:MUSE-low} and Table~\ref{tab:MUSE-high} are larger than that in Table \ref{tab:UnBiLex}. This is attributed to the large amount of good-quality supervision in LEX-C (5,000 human-annotated word pairs) and the larger candidate size in WE-C ($200,000$ candidates). Thirdly, the average performance in Table~\ref{tab:MUSE-low} is lower than that in Table~\ref{tab:MUSE-high}, indicating that the language pairs in the former are more difficult than that in the latter. Nevertheless, we can see that our method has much stronger performance than other unsupervised methods in Table~\ref{tab:MUSE-low}, i.e., on the harder language pairs, and that it performed comparably with the model by ~\citet{conneau2017word} in Table~\ref{tab:MUSE-high} on the easier language pairs. Combining all these observations, we see that our method is highly robust for various language pairs and under different training conditions.


\begin{table}[ht]
\centering
\resizebox{0.5\textwidth}{!}{%
\begin{tabular}{@{}llllll@{}}
\toprule
 & \begin{tabular}[c]{@{}l@{}}Methods\end{tabular} & de-en & es-en & fa-en & it-en \\ \midrule
\multirow{6}{*}{Supervised} & \citet{mikolov2013efficient} & 0.71 & 0.72 & 0.68 & 0.71 \\
 & \citet{zhang2016ten} & 0.71 & 0.71 & 0.69 & 0.71 \\
 & \citet{xing2015normalized} & 0.72 & 0.71 & 0.69 & 0.72 \\
 & \citet{shigeto2015ridge} & 0.72 & \textbf{0.72} & 0.69 & 0.71 \\
 & \citet{artetxe2016learning} & \textbf{0.73} & \textbf{0.72} & \textbf{0.70} & \textbf{0.73} \\
 & \citet{artetxe2017learning} & 0.70 & 0.70 & 0.67 & 0.71 \\ \midrule
\multirow{3}{*}{Unsupervised} & \citet{conneau2017word} & \textbf{0.71} & \textbf{0.71} & \textbf{0.68} & \textbf{0.71} \\
 & \citet{zhang2017adversarial} & - & - & - & - \\
 & Ours & \textbf{0.71} & \textbf{0.71} & 0.67 & \textbf{0.71} \\ \bottomrule
\end{tabular}%
}
\caption{Performance (measured using Pearson correlation) of all the methods in cross-lingual semantic word similarity prediction on the benchmark data from \citet{conneau2017word}. The best score in the supervised and unsupervised category is bold-faced, respectively. The languages include English~(en), German~(de), Spanish~(es), Persian~(fa) and Italian~(it). "-" means that the model failed to converge to reasonable local minimal during the training process.}
\label{tab:MUSE-corr}
\end{table}

\subsection{Results in Cross-lingual Word Similarity Prediction (Task 2) }
We evaluate models on cross-lingual word similarity prediction~(Task 2) to measure how much the predicted cross-language word similarities match the ground truth annotated by humans. Following the convention in benchmark evaluations for this task, we compute the Pearson correlation between the model-induced similarity scores and the human-annotated similarity scores over testing word pairs for each language pair. A higher correlation score with the ground truth represents the better quality of induced embeddings. All systems use the cosine similarity between the transformed embedding of each query and the word embedding of its paired translation as the predicted similarity score.

%
Table~\ref{tab:MUSE-corr} summarizes the performance of all the methods in cross-lingual word similarity prediction. We can see that the unsupervised methods, including ours, perform equally well as the supervised methods, which is highly encouraging.


\section{Conclusion}
In this paper, we presented a novel method for cross-lingual transformation of monolingual embeddings in an unsupervised manner. By simultaneously optimizing the bi-directional mappings w.r.t. Sinkhorn distances and back-translation losses on both ends, our model enjoys its prediction power as well as robustness, with the impressive performance on multiple evaluation benchmarks.  For future work, we would like to extend this work in the semi-supervised setting where insufficient bilingual dictionaries are available.

\section*{Acknowledgments}
We thank the reviewers for their helpful comments. This work is supported in part by Defense Advanced Research Projects Agency Information Innovation Oce (I2O), the Low Resource Languages for Emergent Incidents (LORELEI) Program, Issued by DARPA/I2O under Contract No. HR0011-15-C-0114, and in part by the National Science Foundation (NSF) under grant IIS-1546329.

\bibliographystyle{apalike}
\bibliography{emnlp2018.bib}

\end{document}